\documentclass[sigconf]{acmart}
\usepackage{amsthm}

\newtheorem{theorem}{Theorem}

\setcopyright{none}
\renewcommand\footnotetextcopyrightpermission[1]{}
\renewcommand{\shortauthors}{Geng et al.}
\AtBeginDocument{%
  }

\usepackage{graphicx}
\usepackage{booktabs}
\usepackage{multirow}
\usepackage[table]{xcolor}
\usepackage{tabularx}
\usepackage{array}
\usepackage{threeparttable}
\usepackage{adjustbox}
\usepackage{makecell}
\usepackage{subcaption}

\usepackage{algorithm}
\usepackage{algorithmic}

\usepackage{enumitem}
\usepackage{pifont}

\usepackage{amsmath}
\usepackage{amsthm}
\usepackage{mathtools}
\usepackage{bm}

\newcolumntype{Y}{>{\raggedright\arraybackslash}X}
\newcolumntype{L}[1]{>{\raggedright\arraybackslash}m{#1}}
\newcolumntype{C}[1]{>{\centering\arraybackslash}m{#1}}

\setcopyright{acmlicensed}
\copyrightyear{2018}
\acmYear{2018}
\begin{document}

\title{Seeing is Believing: Robust Vision-Guided Cross-Modal Prompt Learning under Label Noise}


\settopmatter{authorsperrow=3}

\author{Zibin Geng}
\authornote{Both authors contributed equally to this research.}
\affiliation{%
  \institution{Institute of Computing Technology, Chinese Academy of Sciences}
  \institution{University of Chinese Academy of Sciences}
  \city{Beijing}
  \country{China}}
\email{gengzibin25z@ict.ac.cn}

\author{Xuefeng Jiang}
\authornotemark[1]
\affiliation{%
  \institution{Institute of Computing Technology, Chinese Academy of Sciences}
  \institution{University of Chinese Academy of Sciences}
  \city{Beijing}
  \country{China}}
\email{jiangxuefeng21b@ict.ac.cn}

\author{Jia Li}
\affiliation{%
  \institution{Institute of Information Engineering, Chinese Academy of Sciences}
  \institution{University of Chinese Academy of Sciences}
  \city{Beijing}
  \country{China}}
\email{lijia3000@foxmail.com}

\author{Zheng Li}
\affiliation{%
  \institution{PCALab, VCIP, College of Computer Science, Nankai University}
  \city{Tianjin}
  \country{China}}
\email{zhengli97@mail.nankai.edu.cn}

\author{Tian Wen}
\affiliation{%
  \institution{Institute of Computing Technology, Chinese Academy of Sciences}
  \institution{University of Chinese Academy of Sciences}
  \city{Beijing}
  \country{China}}
\email{wentian24s@ict.ac.cn}

\author{Lvhua Wu}
\affiliation{%
  \institution{Institute of Computing Technology, Chinese Academy of Sciences}
  \institution{University of Chinese Academy of Sciences}
  \city{Beijing}
  \country{China}}
\email{wulvhua24s@ict.ac.cn}

\author{Sheng Sun}
\affiliation{%
  \institution{Institute of Computing Technology, Chinese Academy of Sciences}
  \city{Beijing}
  \country{China}}
\email{sunsheng@ict.ac.cn}

\author{Yuwei Wang}
\affiliation{%
  \institution{Institute of Computing Technology, Chinese Academy of Sciences}
  \city{Beijing}
  \country{China}}
\email{ywwang@ict.ac.cn}

\author{Min Liu}
\authornote{Corresponding author.}
\affiliation{%
  \institution{Institute of Computing Technology, Chinese Academy of Sciences}
  \city{Beijing}
  \country{China}}
\email{liumin@ict.ac.cn}
\renewcommand{\shortauthors}{Trovato et al.}

\begin{abstract}

Prompt learning is a parameter-efficient approach for vision-language models, yet its robustness under label noise is less investigated. Visual content contains richer and more reliable semantic information, which remains more robust under label noise. However, the prompt itself is highly susceptible to label noise. Motivated by this intuition, we propose VisPrompt, a lightweight and robust vision-guided prompt learning framework for noisy-label settings. Specifically, we exploit a cross-modal attention mechanism to reversely inject visual semantics into prompt representations. This enables the prompt tokens to selectively aggregate visual information relevant to the current sample, thereby improving robustness by anchoring prompt learning to stable instance-level visual evidence and reducing the influence of noisy supervision. To address the instability caused by using the same way of injecting visual information for all samples, despite differences in the quality of their visual cues, we further introduce a lightweight conditional modulation mechanism to adaptively control the strength of visual information injection, which strikes a more robust balance between text-side semantic priors and image-side instance evidence. The proposed framework effectively suppresses the noise-induced disturbances, reduce instability in prompt updates, and alleviate memorization of mislabeled samples. VisPrompt significantly improves robustness while keeping the pretrained VLM backbone frozen and introducing only a small amount of additional trainable parameters. Extensive experiments under  synthetic and real-world label noise demonstrate that VisPrompt generally outperforms existing baselines on seven benchmark datasets and achieves stronger robustness.
Our code is publicly available at \url{https://github.com/gezbww/Vis_Prompt}.
\end{abstract}

\begin{CCSXML}
<ccs2012>
   <concept>
       <concept_id>10010147.10010178.10010224.10010225</concept_id>
       <concept_desc>Computing methodologies~Computer vision tasks</concept_desc>
       <concept_significance>500</concept_significance>
       </concept>
   <concept>
       <concept_id>10010147.10010178.10010224.10010245.10010251</concept_id>
       <concept_desc>Computing methodologies~Object recognition</concept_desc>
       <concept_significance>500</concept_significance>
       </concept>
 </ccs2012>
\end{CCSXML}

\ccsdesc[500]{Computing methodologies~Computer vision tasks}
\ccsdesc[500]{Computing methodologies~Object recognition}

\keywords{Prompt Learning, Vision-Language Models, Label Noise}


\maketitle

\section{Introduction}
\label{sec:intro}

Vision-language models (VLMs) \cite{CLIP,jia2021scaling,li2025surveystateartlarge} pretrained on large-scale image-text corpora have exhibited remarkable transferability, enabling strong zero-shot performance across a wide range of downstream recognition and retrieval tasks. 
Adapting such pretrained VLMs to a specific task or domain still requires a lightweight interface that is both data-efficient and parameter-efficient.
Prompt learning has therefore become a widely adopted paradigm: It keeps the pretrained backbone frozen and optimizes only a small set of learnable context tokens, often achieving competitive performance with minimal trainable overhead \cite{jia2022vpt,lester-etal-2021-power,li-liang-2021-prefix,DBLP:journals/corr/abs-2110-07602,CoOp,10.1145/3627673.3679529}.

Despite this efficiency, prompt learning remains highly sensitive to annotation quality in practice \cite{CoOp}. Real-world datasets often contain mislabeled samples, and such corrupted supervision can directly mislead the optimization of soft prompts.

This issue is especially pronounced in prompt learning: Since the image and language encoders are frozen, the burden of task adaptation is concentrated on a small number of prompt embeddings, making them highly exposed to noisy gradients induced by incorrect labels.
As the noise rate increases, the learned prompts can gradually drift toward spurious class semantics, leading to unstable optimization and degraded generalization.

\begin{figure}[t]
    \centering
    \includegraphics[width=\columnwidth]{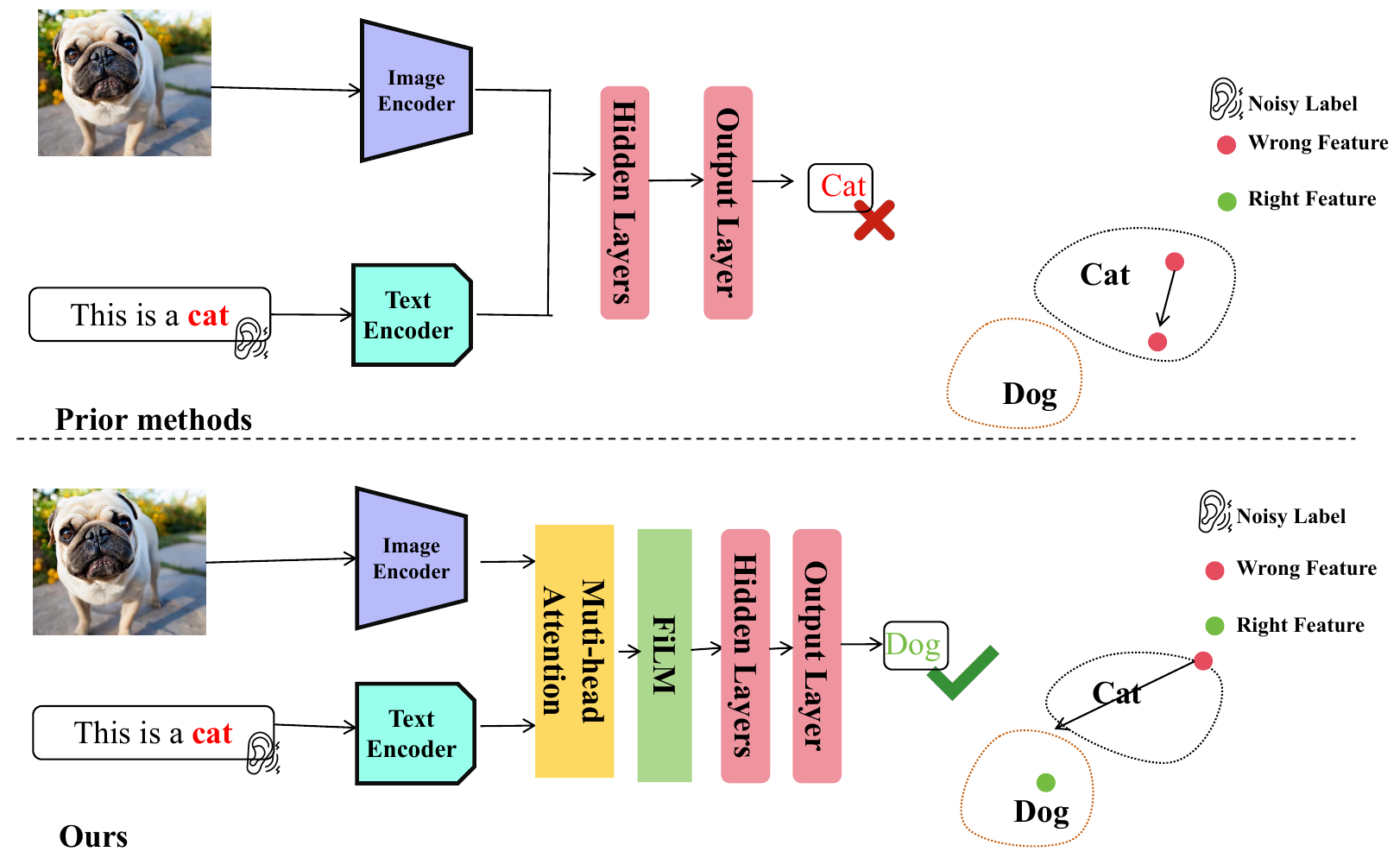}
    \caption{Prior label-driven prompt methods can be misled by noisy supervision and produce incorrect predictions, while our method injects reliable image-grounded information to guide prompt learning toward the correct class.}
    \label{fig:teaser}
\end{figure}

Recent studies have shown that prompt learning with frozen VLM backbones exhibits a certain degree of tolerance to label noise, which has motivated a growing body of research on noise-robust prompt adaptation \cite{wuWhyPromptTuning2023,Guo2024JoAPR}. 
Existing methods \cite{Guo2024JoAPR} mainly improve robustness from three directions, including confidence-based noise detection and trust estimation \cite{wei2024deft,TrustCLIP}, robust objective design \cite{Pan2025NLPrompt,Hu_2025_BMVC}, and sample refinement strategies \cite{Pan2025NLPrompt}. 
These approaches have achieved encouraging results, but most of them still tackle noisy supervision mainly through label-centric mechanisms, rather than rethinking prompt optimization from the perspective of visual guidance.
This design pattern leaves a more reliable source of information underexplored. In contrast, even under corrupted annotations, large-scale pretrained VLMs such as CLIP \cite{CLIP} can still provide rich instance-level semantic cues from images, offering a more stable source of guidance under label noise. However, current methods do not fully leverage such visual cues as a primary anchor for prompt optimization. Instead, they often rely on early-stage confidence estimation or sample discrimination to identify clean data under noisy supervision. Once these early decisions are inaccurate, the resulting confirmation bias can be inherited by subsequent optimization, causing prompt updates to be guided by increasingly unreliable supervision.

Motivated by this intuition, we propose an instance-grounded semantic anchoring principle for prompt learning in the presence of label noise. \textit{Prompt adaptation should be anchored to the semantic content of the input image, rather than relying solely on the observed noisy label}. This principle is based on a simple intuition that, although annotations may be corrupted, the image itself still preserves instance-specific semantic evidence that can provide a more stable guidance signal for prompt optimization.

To instantiate this principle, we propose \textbf{VisPrompt} (Vision-Guided Cross-Modal Prompt Learning under Label Noise), a robust prompt learning framework that explicitly injects image-grounded semantics into prompt adaptation in noisy-label settings. VisPrompt first introduces {\textit{cross-modal attention}} to inject visual evidence into the learnable prompts. Instead of updating prompts solely according to label supervision, our method allows prompt tokens to selectively aggregate semantically relevant information from the images. In this way, the prompt representation is no longer optimized only toward the observed noisy objective, but is also constrained by the semantics of the current instance. As illustrated in Fig.~\ref{fig:teaser}, such image-grounded conditioning helps steer the prediction toward the correct class even when the assigned label is corrupted. However, the usefulness of visual evidence is not uniform across samples. Applying the same fusion strength to all instances may underuse highly informative visual cues, while also introducing unstable or irrelevant perturbations from low-quality ones. To address this issue, we further introduces {\textit{FiLM (Feature-wise Linear Modulation)}} as an instance-adaptive control mechanism. Conditioned on the current visual representation, FiLM adaptively modulates how much visual evidence should be injected into the prompt and which feature dimensions should be emphasized or suppressed. This selective modulation enables a more controlled integration of image-side evidence, leading to more stable prompt updates under noisy supervision.
From a robustness perspective, cross-modal attention and FiLM modulation selectively introduce and regulate visual evidence, thereby improving the stability of prompt optimization under noisy supervision. Importantly, VisPrompt improves robustness with minimal trainable overhead, introducing less than 1\% additional parameters while keeping the pretrained VLM backbone frozen. Our main contributions are summarized as follows:

\begin{itemize}[leftmargin=0.3cm]

\item We propose \emph{VisPrompt, a lightweight and robust prompt learning framework for noisy-label settings.} It reformulates prompt adaptation from a cross-modal perspective and exploits image-grounded semantics to guide prompt learning. This reduces the interference of corrupted labels during optimization.

\item We introduce a vision-guided prompt modulation mechanism that combines cross-modal attention with FiLM gating. Cross-modal attention \emph{extracts visual evidence that is more informative and reliable.} FiLM then adaptively \emph{controls how this evidence is injected into the prompt representation.} This enables selective enhancement of reliable cues and suppression of unstable perturbations.

\item We validate the robustness of VisPrompt through \emph{theoretical analysis.} The analysis shows that cross-modal attention yields a denoised approximation of clean semantics. It also shows that FiLM injects such evidence into prompts in a bounded and stable manner. Extensive experiments further demonstrate that VisPrompt achieves competitive or superior performance compared with representative robust prompt learning based methods on seven datasets with synthetic and real-world label noise.

\end{itemize}

\section{Related Work}
\label{sec:related_work}
\subsection{Prompt Learning}
With the rapid advance of vision-language models, prompt learning emerges as a pivotal research direction as a parameter-efficient learning approach. Numerous studies \cite{cai2025attributebasedvisualreprogrammingvisionlanguage, cai2024samplespecificmasksvisualreprogrammingbased,CLIP} demonstrate its efficiency and scalability, with CLIP \cite{CLIP} being the most representative work. Early prompt learning relies on manually designed templates, such that minor changes could result in significant performance variation. CoOP \cite{CoOp} introduces learnable continuous context on the text side and optimizes end to end with a frozen backbone, significantly improving adaptation efficiency. CoCoOp \cite{zhouConditionalPromptLearning2022} leverages image conditioned context to enhance generalization to unseen classes, and MaPLe \cite{khattakMapleMultimodalPrompt2023} extends prompt integration to deeper layers, coupling it along the entire vision-language pathway. KAPT \cite{Kan2023KnowledgeAwarePT} and ATPrompt \cite{li2025advancing} introduce external attribute knowledge and generic attribute tokens, respectively, to strengthen cross-domain generalization and class alignment. While these methods improve accuracy and transferability given clean supervision, their capability to handle noisy labels remains rarely explored.

\subsection{Noisy Label Learning}

In real-world datasets, label noise is inevitable, and corrupted annotations can cause severe performance degradation. To address this issue, numerous previous works have been proposed from diverse aspects, including robust loss functions \cite{fengCanCrossEntropy2021,lyuCurriculumLossRobust2019,ghoshRobustLossFunctions2017,fedrom}, robust regularization \cite{hendrycksUsingPretrainingCan2019,menonCanGradientClipping2020,xiaRobustEarlylearningHindering2020, jiang2022towards}, sample selection \cite{wei2024deft,fedrom} and meta-learning \cite{liDividemixLearningNoisy2020,patelAdaptiveSampleSelection2023,songSelfieRefurbishingUnclean2019},  loss correction \cite{changActiveBiasTraining2017,xiaAreAnchorPoints2019,yaoDualReducingEstimation2020}, and robust training framework design \cite{leeRobustInferenceGenerative2019,yaoDeepLearningNoisy2018,jiang2024tackling}. However, these works mainly focus on the unimodal tasks like image or text classification. 
In prompt learning, however, research on handling noisy labels remains limited. PTNL \cite{wuWhyPromptTuning2023} opens this direction by revealing the robustness potential of prompt learning. GCE \cite{GCE} enhances the robustness of prompt learning from the perspective of loss function design. JoAPR \cite{Guo2024JoAPR} fits a two-component Gaussian mixture to the loss distribution and uses adaptive thresholds for clean-sample selection and relabeling. NLPrompt \cite{Pan2025NLPrompt} employs optimal transport to partition clean and noisy subsets, while TrustCLIP \cite{TrustCLIP} estimates trustworthiness from the structural semantics learned during training. Although these methods achieve meaningful gains, they mostly rely on auxiliary confidence modeling or sample partition strategies, and largely optimize robustness from the text or label side. As a result, they do not fully exploit image content itself as a relatively stable and instance-specific source of supervision under label corruption.

\subsection{Vision-guided prompt generation}
Beyond text-only prompt learning methods such as CoOp \cite{CoOp}, recent studies have explored incorporating visual cues into prompt learning for adapting vision-language models. CoCoOp \cite{zhouConditionalPromptLearning2022} conditions prompts on individual input images, while subsequent methods such as DPT \cite{Ranftl2021}, further strengthen cross-modal interaction by jointly leveraging visual and textual information. Other works, such as GalLoP \cite{lafon2024gallop} , exploit local visual evidence or attribute-level semantics to improve prompt quality and generalization. These studies collectively suggest that visual guidance can enrich prompt learning with instance-specific semantics and improve adaptation performance. 
While existing approaches have mitigated label noise to some extent, they predominantly focus on loss function design, noisy-sample detection, and data purification, often overlooking the potential of visually guided prompt generation. To bridge this gap, we propose VisPrompt, a vision-guided framework for few-shot learning under noisy label conditions.

\section{Preliminary}

We first introduce the standard prompt learning paradigm for vision-language pre-trained models. Given an input image, the image encoder and text encoder project visual and textual inputs into a shared embedding space, where classification is performed according to their similarity.

For Vision Encoder, given an input image $I$, the image encoder first divides it into $M$ patches and maps them into patch embeddings, denoted by
$V^{0}=\{v_1^{0}, \dots, v_M^{0}\}$.
A learnable class token $v_{\mathrm{cls}}^{0}$ is then appended to the patch sequence.
The resulting tokens are fed into a stack of vision transformer layers:

$[V^{l}, v_{\mathrm{cls}}^{l}] = \mathrm{Image}^{l}\bigl([V^{l-1}, v_{\mathrm{cls}}^{l-1}]\bigr),\quad l=1,\dots,L_I.$
where $L_I$ denotes the number of layers in the visual encoder.
The final image representation in the shared vision-language space is obtained by projecting the class token from the last layer:
$h = f_{\mathrm{Proj}} \bigl(v_{\mathrm{cls}}^{L_I}\bigr).$

For Text Encoder, given a text prompt, the text encoder tokenizes it into $N$ tokens, denoted by
$C^{0}=\{c_1^{0}, \dots, c_N^{0}\}$.
These token embeddings are processed by a stack of text transformer layers:
$T^{l} =
\mathrm{Text}^{l}(C^{l-1}),
\quad l=1,\dots,L_T,$
where $L_T$ is the number of layers in the text encoder.
The final text feature is produced by projecting the last token at the top layer into the shared vision-language space:
$g = f_{\mathrm{Proj}} \bigl(c_N^{L_T}\bigr).$
In standard prompt learning, the parameters of both the image encoder and the text encoder are kept frozen, and only a small set of learnable context tokens is optimized.
For a class name represented by $[{\rm CLASS}]$, the prompt is constructed by combining it with $n$ learnable context tokens, denoted by
$P=\{p_1,\dots,p_n\}$.
The resulting prompt can be formulated as
\begin{equation}
\mathcal{C}_k = [p_1, p_2, \dots, p_n, \mathrm{class}_k],
\end{equation}
where $\mathrm{class}_k$ denotes the tokenized name of the $k$-th class.
The prompt is then fed into the text encoder to obtain the corresponding class-level text feature:
\begin{equation}
g_k = \mathrm{TextEncoder}(\mathcal{C}_k), \quad k=1,\dots,K,
\end{equation}
where $K$ is the number of classes.

Given an input image, the image encoder produces image feature $h$, while the text encoder generates a text prototype $g_k$ for each class.
Result is performed in the shared embedding space according to the similarity between $h$ and $g_k$:
\begin{equation}
p(y=k \mid I)=
\frac{\exp \bigl(\mathrm{sim}(h,g_k)/\tau \bigr)}
{\sum_{j=1}^{K}\exp \bigl(\mathrm{sim}(h,g_j)/\tau \bigr)},
\end{equation}
where $\mathrm{sim}(\cdot,\cdot)$ denotes cosine similarity and $\tau$ is a temperature parameter.
The model is trained by updating the learnable prompt tokens such that the text representation of the correct class becomes better aligned with the visual representation of the input image.

\section{Methodology}

\label{sec:method_pipeline}
\begin{figure*}[htbp]
\centering
\includegraphics[width=\linewidth]{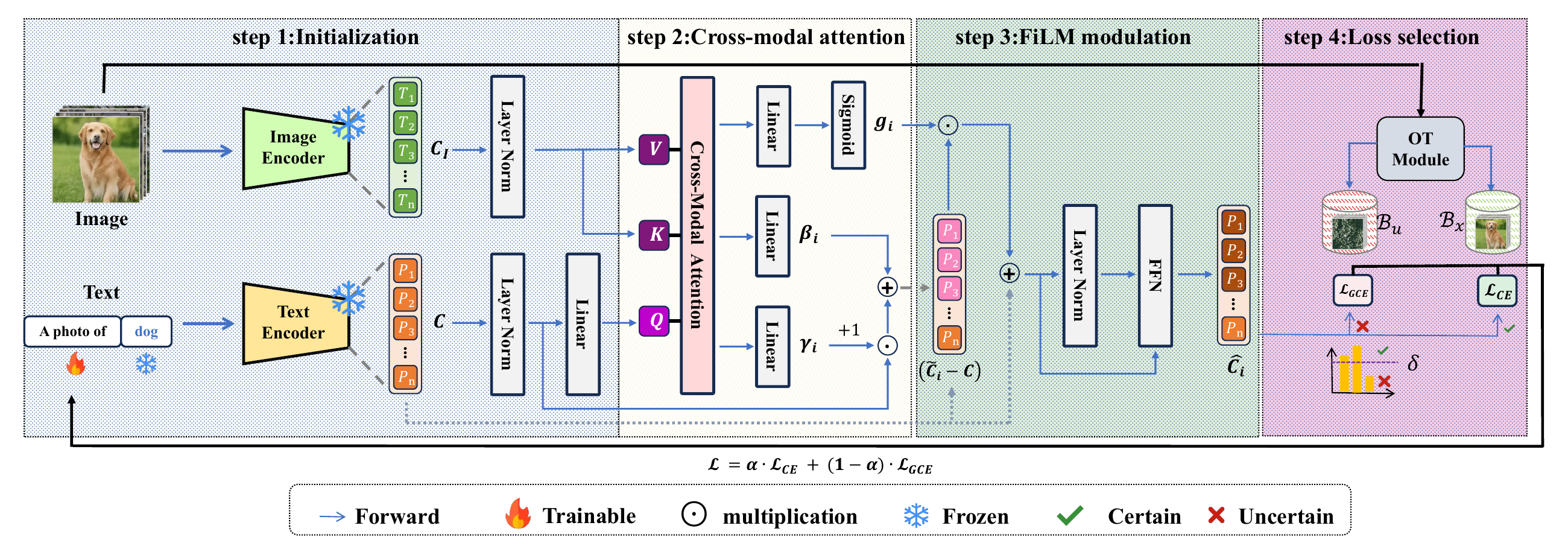}
 \caption{The overall architecture of VisPrompt which consists of four steps. Step 2 and Step 3 denote the robust update process.
    }
 \label{fig:frame}
\end{figure*}
To improve the robustness of prompt learning under noisy supervision, we propose \textbf{VisPrompt}, a visual-guided prompt learning framework for noise-robust adaptation. As illustrated in Fig.~\ref{fig:frame}, the framework consists of two key components: \textbf{(i)} Cross-modal Visual Prompt Conditioning, which injects multiple local visual tokens into the prompt context through cross-modal attention; \textbf{(ii)} FiLM-based Robust Modulation, which adaptively controls the injected visual guidance through conditional modulation and residual gating. By combining reliable image-grounded semantics with the generic prior of the text branch, VisPrompt mitigates noise-induced prompt drift and yields more stable and robust prompt optimization.

\subsection{VisPrompt Framework}
\label{sec:visprompt_framework}

VisPrompt is instantiated through four sequential stages: 
Initialization, cross-modal visual prompt conditioning, FiLM-based robust modulation, and robust loss selection.

\subsubsection{Learnable Context Initialization}

Within the prompt learning framework, we introduce $n_{\text{ctx}}$ learnable context tokens as the optimizable part of the prompt. 
Depending on the parameterization strategy, the context can be either class-shared or class-specific. 
For the class-shared setting, the learnable context is denoted as
\begin{equation}
\mathbf{C} \in \mathbb{R}^{n_{\text{ctx}} \times d},
\label{eq:ctx_shared}
\end{equation}
where $d$ is the dimensionality of the text embedding space. 
For the class-specific setting, the context can be extended as
\begin{equation}
\mathbf{C} \in \mathbb{R}^{N \times n_{\text{ctx}} \times d},
\label{eq:ctx_specific}
\end{equation}
where $N$ denotes the number of classes.

\subsubsection{Visual Feature Projection}
Unlike class labels, which may be corrupted at the sample level, local image content still preserves fine-grained semantic evidence from the underlying visual instance. To explicitly exploit such relatively reliable instance-level cues, we extract a set of local visual representations from the input image. Given an input image $x_i$, the image encoder produces
\begin{equation}
\mathbf{V}_i \in \mathbb{R}^{M \times d_v},
\label{eq:visual_tokens}
\end{equation}
where $M$ is the number of visual tokens and $d_v$ is the dimensionality of the visual feature space. 
Since the visual and textual branches generally reside in different representation spaces, we employ a learnable linear projection matrix
\begin{equation}
\mathbf{W}_p \in \mathbb{R}^{d_v \times d}
\label{eq:projection_matrix}
\end{equation}
to align visual features with the prompt embedding space, yielding
\begin{equation}
\mathbf{Z}_i = \mathbf{V}_i \mathbf{W}_p \in \mathbb{R}^{M \times d}.
\label{eq:projected_visual}
\end{equation}

Projecting these tokens into the text-aligned space allows the model to access image-grounded semantics that are less affected by annotation errors. 
As a result, the prompt is not forced to rely solely on potentially misleading supervision, but can instead condition on more reliable instance-level cues.

\subsubsection{Cross-modal Visual Prompt Conditioning}

To reduce the direct influence of noisy labels on prompt updates, this module is designed to introduce an \emph{instance-specific image-conditioned signal} into the learnable context. 
Instead of letting the prompt be updated solely by the supervisory signal of the assigned label, we explicitly establish cross-modal interactions between the context tokens and the projected visual tokens, so that the prompt can be conditioned on the semantic content of the current image. 
Let $\mathbf{C}$ denote the context tokens and $\mathbf{Z}_i$ denote the projected visual token sequence associated with image $x_i$. 
The cross-modal conditioning feature is defined as
\begin{equation}
\mathbf{A}_i = \mathrm{MHA}\bigl(\mathrm{LN}(\mathbf{C}), \mathrm{LN}(\mathbf{Z}_i), \mathrm{LN}(\mathbf{Z}_i)\bigr),
\label{eq:cross_attention}
\end{equation}
where $\mathrm{MHA}(\cdot)$ and $\mathrm{LN}(\cdot)$ denote multi-head attention and layer normalization, respectively.

Here, $\mathbf{A}_i$ is constructed as the cross-modal conditioning representation for the current sample. 
Its role is to use the context tokens as queries to selectively retrieve semantically relevant local evidence from the visual tokens, and to organize such image-grounded information into a conditioning signal for subsequent prompt modulation.

\subsubsection{FiLM-based Robust Modulation}

Although visual evidence is generally more reliable than noisy labels, its quality may still vary across instances. 
To incorporate such instance-dependent visual cues in a controlled manner, we employ a FiLM-based modulation mechanism together with token-wise gating. 
Based on the cross-modal feature $\mathbf{A}_i$, we first generate FiLM modulation parameters to perform feature-wise adjustment on the normalized context:
\begin{equation}
\tilde{\mathbf{C}}_i
=
\mathrm{LN}(\mathbf{C}) \odot \bigl(1 + \phi_{\gamma}(\mathbf{A}_i)\bigr)
+
\phi_{\beta}(\mathbf{A}_i),
\label{eq:film_modulation}
\end{equation}
and further compute a token-wise gate as
\begin{equation}
\mathbf{G}_i
=
\sigma\bigl(\phi_{g}(\mathbf{A}_i)\bigr),
\label{eq:gate}
\end{equation}
where $\phi_{\gamma}(\cdot)$, $\phi_{\beta}(\cdot)$, and $\phi_{g}(\cdot)$ are learnable mappings for the scaling term, bias term, and gate generation, respectively, $\sigma(\cdot)$ denotes the sigmoid function, and $\odot$ denotes element-wise multiplication.

Given the modulated context $\tilde{\mathbf{C}}_i$ and the token-wise gate $\mathbf{G}_i$, we then perform a residual gated update:
\begin{equation}
\mathbf{C}'_i
=
\mathbf{C}
+
\mathbf{G}_i \odot \bigl(\tilde{\mathbf{C}}_i - \mathbf{C}\bigr),
\label{eq:residual_update}
\end{equation}
followed by a feed-forward refinement:
\begin{equation}
\hat{\mathbf{C}}_i
=
\mathbf{C}'_i + \mathrm{FFN}\bigl(\mathrm{LN}(\mathbf{C}'_i)\bigr),
\label{eq:ffn_refine}
\end{equation}
The resulting $\hat{\mathbf{C}}_i$ is used as the image-conditioned prompt context for the current sample.

\subsubsection{Robust Loss Design}

The impact of label noise varies across samples. 
Clean samples usually provide relatively reliable supervision, whereas noisy samples are more likely to be dominated by corrupted annotations. 
When all samples are optimized with the same loss function, the model may gradually overfit noisy labels and thus lose robustness. 

To alleviate this issue, this work introduces an Optimal Transport (OT) \cite{OT} mechanism to estimate sample reliability under noisy supervision. 
According to the estimated reliability, the training set is partitioned into a reliable subset and an unreliable subset. 
Different loss functions are then assigned to these two subsets. 
This strategy allows the optimization process to better adapt to the heterogeneous quality of supervisory signals. 

The central function of OT is to determine a global transport plan between two predefined distributions under a given transport cost and marginal constraints. 
By jointly modeling all sample class matching relationships, OT can suppress locally abnormal assignments and produce a more globally consistent correspondence structure. 

Assume that the training set consists of $N$ samples and $L$ classes. The frozen image encoder extracts the image feature matrix $V \in \mathbb{R}^{N \times d}$, where $d$ denotes the feature dimension, while the text encoder produces the class text feature matrix $T \in \mathbb{R}^{L \times d}$.

The similarity score between each sample and each class is then computed. 
The normalized similarity is further converted into the transport cost used in OT. 
Specifically, the normalized similarity between sample $i$ and class $j$ is defined as
\begin{equation}
S_{ji}
=
\frac{\exp\!\left(\mathrm{sim}(t_j, v_i)/\tau\right)}
{\sum_{k=1}^{L}\exp\!\left(\mathrm{sim}(t_k, v_i)/\tau\right)},
\label{eq:ot_similarity}
\end{equation}
where $\tau$ is the temperature coefficient and $\mathrm{sim}(\cdot,\cdot)$ denotes the similarity function. 
The corresponding matching cost is defined by
\begin{equation}
D_{ji} = -\log S_{ji}.
\label{eq:ot_cost}
\end{equation}
The transport plan is obtained by solving the entropic OT problem
\begin{equation}
\Pi^{*}
=
\arg\min_{\Pi \in \mathcal{U}(a,b)}
\langle \Pi, D \rangle
-
\varepsilon H(\Pi),
\label{eq:ot_sinkhorn}
\end{equation}
where $\mathcal{U}(a,b)$ denotes the set of admissible transport plans with prescribed marginals $a$ and $b$, $\varepsilon$ is the entropic regularization coefficient, and $H(\Pi)$ denotes the entropy term. The optimal transport plan $\Pi^{*}$ is efficiently computed by the Sinkhorn algorithm.

Based on $\Pi^{*}$, a reliability score is assigned to each sample. For sample $i$, the OT-induced pseudo-label is first defined as
\begin{equation}
\hat{y}_i = \arg\max_{j} \Pi^{*}_{ji}.
\label{eq:ot_pseudo_label}
\end{equation}
The corresponding confidence score is then defined as
\begin{equation}
r_i = \Pi^{*}_{\hat{y}_i i}.
\label{eq:ot_confidence}
\end{equation}
A sample is regarded as reliable if its OT assignment is consistent with the observed label and its confidence exceeds a predefined threshold $\delta$. Accordingly, the reliable subset and unreliable subset are defined as
\begin{equation}
\mathcal{B}_{x}
=
\left\{
(x_i,y_i)\; \middle|\; \hat{y}_i = y_i,\; r_i \ge \delta
\right\}.
\label{eq:reliable_set}
\end{equation}

\begin{equation}
\mathcal{B}_{u}
=
\left\{
(x_i,y_i)\; \middle|\; \hat{y}_i \neq y_i \;\text{or}\; r_i < \delta
\right\}.
\label{eq:unreliable_set}
\end{equation}
The reliable subset is optimized with standard cross-entropy loss,
\begin{equation}
\mathcal{L}_{\mathrm{CE}}
=
-
\frac{1}{|\mathcal{B}_{x}|}
\sum_{(x_i,y_i)\in \mathcal{B}_{x}}
\log p_{i,y_i},
\label{eq:ce_reliable}
\end{equation}
where $p_{i,y_i}$ denotes the predicted probability of sample $i$ on its annotated class $y_i$. For the unreliable subset, the generalized cross-entropy loss is adopted to reduce the adverse effect of corrupted labels:
\begin{equation}
\mathcal{L}_{\mathrm{GCE}}
=
\frac{1}{|\mathcal{B}_{u}|}
\sum_{(x_i,y_i)\in \mathcal{B}_{u}}
\frac{1 - p_{i,y_i}^{\,q}}{q},
\qquad q \in (0,1].
\label{eq:gce_unreliable}
\end{equation}

The overall robust training objective is therefore formulated as
\begin{equation}
\mathcal{L}_{\mathrm{robust}}
=
\alpha \mathcal{L}_{\mathrm{CE}}
+
(1 - \alpha) \mathcal{L}_{\mathrm{GCE}},
\label{eq:robust_loss}
\end{equation}
where $\alpha$ controls the extent of the unreliable subset for optimization.

\textbf{Remark.}
The above design preserves parameter-efficient for prompt learning: The pre-trained image encoder and text encoder remain frozen, while only the learnable prompt context and the lightweight visual-guided modulation modules are optimized.

\subsection{Theoretical Analysis}
\label{sec:theory}

Herein we explain why VisPrompt is robust under corrupted supervision from two aspects: Cross-modal attention can extract reliable semantic signals grounded in image content, while FiLM-based gated modulation injects this signal into the prompt in a controlled manner rather than  uncontrolled prompt drift.

\subsubsection{Basic assumptions}
For each image $x_i$, let $s_i$ denote the latent clean semantic representation of the instance in the shared vision-language space. 
Assume that: 
(i) among the $M$ visual tokens, there exists an informative subset whose elements are close to $s_i$; 
(ii) cross-modal attention assigns larger scores to informative tokens than to irrelevant ones, with a positive margin $\Delta_i$; 
(iii) the FiLM and gating mappings are Lipschitz continuous \cite{lips}, and the residual gate is bounded in $[0,1]$. 
These assumptions are consistent with the roles of shared image-text representation, instance-conditioned prompts, and feature-wise modulation in prior studies~\cite{perez2018film,zhouConditionalPromptLearning2022}.

\subsubsection{Cross-modal semantic aggregation}
The cross-modal attention output is formulated as
\begin{equation}
A_i=\sum_{m=1}^{M}\alpha_{i,m} z_{i,m},
\qquad
\alpha_{i,m}=\frac{\exp(a_{i,m})}{\sum_{r=1}^{M}\exp(a_{i,r})},
\label{eq:theory_attn}
\end{equation}
where $a_{i,m}$ is the attention score between the prompt query and the $m$-th visual token. 
Since informative tokens are assumed to have a score margin over irrelevant ones, the softmax weights concentrate on semantically relevant local evidence. 
As a result, the aggregated feature $A_i$ serves as a denoised approximation of the clean instance semantics $s_i$, and the approximation error decreases as the attention margin increases.

\subsubsection{Controlled prompt modulation}
Let the FiLM-based robust modulation be denoted compactly by
\begin{equation}
\hat{\mathbf{C}}_i=\mathcal{F}(\mathbf{C},A_i),
\label{eq:theory_modulation}
\end{equation}
where $\mathcal{F}$ summarizes the FiLM transformation, token-wise gating, residual update, and FFN refinement defined in Eqs.~(12)--(15). 
Denote by $\hat{\mathbf{C}}_i^\star=\mathcal{F}(\mathbf{C},s_i)$ the ideal prompt that would be obtained if the clean semantic signal $s_i$ were directly available. 
Because FiLM is feature-wise and the residual gate is bounded, $\mathcal{F}$ is a stable Lipschitz mapping with respect to its conditioning signal, which means that perturbations in $A_i$ cannot be arbitrarily amplified during prompt modulation.

\begin{theorem}
Under the above assumptions, there exists a constant $L_{\mathrm{mod}}>0$ such that
\begin{equation}
\|\hat{\mathbf{C}}_i-\hat{\mathbf{C}}_i^\star\|
\le
L_{\mathrm{mod}}
\Bigl(
\varepsilon_v + c_i e^{-\Delta_i}
\Bigr),
\label{eq:main_theorem}
\end{equation}
where $\varepsilon_v$ measures the approximation error between informative visual tokens and the clean instance semantics, and $c_i e^{-\Delta_i}$ is the residual distraction term induced by irrelevant tokens.
\end{theorem}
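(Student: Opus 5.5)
The argument splits into two estimates. First, the Lipschitz property of the modulation map $\mathcal{F}$ in its conditioning argument (assumption (iii)) reduces the prompt error to the error in the conditioning signal. Second, a softmax concentration bound controls $\|A_i-s_i\|$ using assumptions (i)--(ii). Chaining them gives \eqref{eq:main_theorem}.

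\textbf{Step 1: Lipschitz reduction.} Write $\mathcal{F}(\mathbf{C},A)=\mathbf{C}'+\mathrm{FFN}(\mathrm{LN}(\mathbf{C}'))$ with
\[
\mathbf{C}'=\mathbf{C}+\sigma(\phi_g(A))\odot\bigl(\mathrm{LN}(\mathbf{C})\odot(1+\phi_\gamma(A))+\phi_\beta(A)-\mathbf{C}\bigr).
\]
For fixed $\mathbf{C}$, $\mathrm{LN}(\mathbf{C})$ is a constant whose entries are bounded by $\sqrt{d}$. The maps $\phi_\gamma,\phi_\beta,\phi_g$ are Lipschitz, and $\sigma$ is $1/4$-Lipschitz with values in $[0,1]$.

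A product-rule estimate, $\|G\odot X-G'\odot X'\|\le\|G-G'\|\|X\|_\infty+\|G'\|_\infty\|X-X'\|$, then shows that $A\mapsto\mathbf{C}'$ is Lipschitz. It needs a bound on $\|X\|_\infty$ over the relevant set of conditioning signals. I would get this by restricting to the bounded convex hull of the tokens $\{z_{i,m}\}$ together with $s_i$, which contains both $A_i$ and $s_i$; continuity of the $\phi$'s bounds them there.

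Composing with $\mathrm{LN}$ and the FFN requires that both be Lipschitz. The FFN is Lipschitz as an MLP with Lipschitz activation. LayerNorm is Lipschitz away from zero variance, so I would either assume an $\epsilon$ in its denominator, as is standard, or fold this into assumption (iii). This yields
\[
\|\hat{\mathbf{C}}_i-\hat{\mathbf{C}}_i^\star\|\le L_{\mathrm{mod}}\|A_i-s_i\|.
\]

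\textbf{Step 2: attention approximation.} Let $\mathcal{I}$ be the informative subset, with $\|z_{i,m}-s_i\|\le\varepsilon_v$ for $m\in\mathcal{I}$, and let $\mathcal{I}^c$ be the rest. Since $\sum_m\alpha_{i,m}=1$,
\[
A_i-s_i=\sum_{m\in\mathcal{I}}\alpha_{i,m}(z_{i,m}-s_i)+\sum_{m\notin\mathcal{I}}\alpha_{i,m}(z_{i,m}-s_i).
\]
The first sum has norm at most $\varepsilon_v$. For the second, the margin assumption $a_{i,m}\le a_{i,r}-\Delta_i$ for $m\notin\mathcal{I}$, $r\in\mathcal{I}$ gives
\[
\alpha_{i,m}\le\frac{e^{a_{i,m}}}{\sum_{r\in\mathcal{I}}e^{a_{i,r}}}\le\frac{e^{-\Delta_i}}{|\mathcal{I}|}.
\]
So the second sum is bounded by $c_ie^{-\Delta_i}$ with
\[
c_i=\frac{|\mathcal{I}^c|}{|\mathcal{I}|}\max_{m\notin\mathcal{I}}\|z_{i,m}-s_i\|.
\]
Combining with Step 1 gives the theorem.

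\textbf{Main obstacle.} The delicate step is Step 1, specifically making $L_{\mathrm{mod}}$ a genuine constant. Two issues arise. First, the Hadamard-product terms are only locally Lipschitz, so uniformity requires the boundedness restriction above. Second, the multi-head attention with the output projection $W_O$ and layer norms does not literally take the single-softmax form \eqref{eq:theory_attn}. I would handle the latter by absorbing the output projection into the tokens $z_{i,m}$ (a linear map, so it only rescales $L_{\mathrm{mod}}$) and treating each head separately, then summing. A further subtlety is that $s_i$ and $A_i$ must live in the same space, which assumption (i) implicitly grants.
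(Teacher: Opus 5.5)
Your proposal is correct and follows the same two-step route as the paper. First, the exponential decay of softmax mass on irrelevant tokens gives $\|A_i-s_i\|\le\varepsilon_v+c_ie^{-\Delta_i}$; second, this error is passed through the Lipschitz modulation map $\mathcal{F}$, which produces the factor $L_{\mathrm{mod}}$. The paper only sketches both steps, so your explicit constant $c_i=\frac{|\mathcal{I}^c|}{|\mathcal{I}|}\max_{m\notin\mathcal{I}}\|z_{i,m}-s_i\|$ and your observation that the Hadamard-product terms make $\mathcal{F}$ only locally Lipschitz (hence your restriction to a bounded set containing $A_i$ and $s_i$) are refinements of the same argument rather than a different approach.
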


\textbf{Proof}
The proof follows two steps. 
First, by the positive attention margin, the softmax mass assigned to irrelevant visual tokens decays exponentially with $\Delta_i$, so the aggregated feature $A_i$ remains close to the clean semantic signal $s_i$. 
Second, since FiLM and gate generation are Lipschitz \cite{lips} and the residual gate is bounded, the modulation map $\mathcal{M}$ transmits this perturbation in a controlled way rather than amplifying it, which directly yields Eq.~\eqref{eq:main_theorem}.

\subsubsection{Implication for robustness.}
Eq.~\eqref{eq:main_theorem} shows that the deviation of the image-conditioned prompt from its ideal clean-semantic counterpart is jointly controlled by two factors: the quality of informative local visual evidence and the attention margin over irrelevant tokens. 
Therefore, compared with purely label-driven prompt updates, the proposed design is less sensitive to corrupted supervision: Cross-modal attention suppresses noisy or irrelevant evidence at the aggregation stage, while FiLM-based gated modulation prevents such perturbations from causing excessive prompt drift.

Furthermore, if the clean-prompt logit margin is sufficiently large, the final prediction remains unchanged:
\begin{equation}
m_i^\star
>
2L_h L_{\mathrm{mod}}
\Bigl(
\varepsilon_v + c_i e^{-\Delta_i}
\Bigr),
\label{eq:margin_condition}
\end{equation}
where $m_i^\star$ is the classification margin under the ideal prompt and $L_h$ is the Lipschitz constant of the frozen classifier with respect to the prompt representation. 
This condition indicates that the class decision is preserved as long as the perturbation induced by noisy supervision stays below the clean margin.

\section{Experiments }
\label{sec:experienment}

\subsection{Datasets with Noisy Labels} 
We evaluate our method on six benchmark datasets with synthetic symmetric label noise: EuroSAT \cite{eurosat}, Flowers102 \cite{flowers102}, OxfordPets \cite{oxfordpets}, DTD \cite{dtd}, UCF101 \cite{ucf101}, and Caltech101 \cite{caltech101}. These datasets cover diverse recognition objectives, and their statistics are summarized in Table~\ref{tab:datasets}. Owing to their reliable annotations, they provide a suitable testbed for controlled synthetic noise injection under the few-shot setting.
Meanwhile, we further conduct evaluation on Food101N \cite{food101n}, a real-world noisy dataset collected from web sources. Unlike synthetic corruption, its label noise arises naturally from weak supervision and imperfect data collection.

Regarding the synthetic label noise patterns, we introduce both symmetric and asymmetric label noise. For symmetric noise, each class label is flipped to any other class with equal probability. In contrast, asymmetric noise assigns different flip probabilities to each class, better mirroring real-world conditions. We consider six label noise rates at 12.5\%, 25\%, 37.5\%, 50\%, 62.5\%, and 75\% to provide broad coverage of plausible noise rates.

\begin{table}[htbp]
\centering
\caption{Dataset Statistics.}
\label{tab:datasets}
\small
\setlength{\tabcolsep}{2.5pt}
\renewcommand{\arraystretch}{1.05}

\begin{tabular}{@{}lccccp{1.8cm}@{}}
\toprule
Dataset & Classes & Train & Test & Noise Type & Objective \\
\midrule
Caltech101 \cite{caltech101} & 100 & 4,128   & 2,465  & Synthetic  & Objects \\
Flowers102 \cite{flowers102} & 102 & 4,093   & 2,463  & Synthetic  & Flowers \\
OxfordPets \cite{oxfordpets} & 37  & 2,944   & 3,669  & Synthetic  & Pets \\
UCF101 \cite{ucf101}         & 101 & 7,639   & 3,783  & Synthetic  & Human actions \\
DTD \cite{dtd}               & 47  & 2,820   & 1,692  & Synthetic  & Textures \\
EuroSAT \cite{eurosat}       & 10  & 13,500  & 8,100  & Synthetic  & Satellite scenes \\ \midrule
Food101N \cite{food101n}     & 101 & 310,009 & 30,300 & Real-world & Food categories \\
\bottomrule
\end{tabular}
\end{table}

\subsection{Baselines and Implementation Details}

In this study, our baselines for comparison include CoOP \cite{CoOp}, GCE \cite{GCE}, JoAPR \cite{Guo2024JoAPR}, and NLPrompt \cite{Pan2025NLPrompt}, which are representative methods for robust prompt learning under noisy labels. 
We adopt the ResNet-50 (RN50) \cite{He2015DeepRL} and ViT-B/16 \cite{50650} as the image encoders. Unless otherwise clarified, we report the evaluation results with RN50 by default. For the fair comparison with baselines, we adopt the same hyperparameter settings as CoOP \cite{CoOp}, JoAPR \cite{Guo2024JoAPR}, and NLPrompt \cite{Pan2025NLPrompt} with the SGD optimizer with a learning rate of 0.002 under a cosine annealing scheduler. All main experiments are conducted on an Nvidia RTX 4090 GPU and PyTorch framework \cite{paszkePytorchImperativeStyle2019}. We use accuracy as the main evaluation metric. Results are averaged over 3 different random seeds to ensure the fairness. Mixed precision training \cite{mixedP} is used to accelerate the training process. By default, the prompt length is 16, the number of attention heads is 8, no class-specific initialization is applied, and the number of shots is 16. The training batch size is 16 for all methods. We perform 200 training epochs to ensure the convergence.

\begin{table*}[htbp]
\centering
\caption{Performance (\%) over different datasets with varying noise rates. The \textbf{bold} denotes the best performance.}
\label{tab:main}
\resizebox{\textwidth}{!}{
\begin{tabular}{c|c|cccccc|c|cccccc|c}
\toprule 
\multirow{2}{*}{\textbf{Dataset}} & \multirow{2}{*}{\textbf{Method}} & \multicolumn{6}{c|}{\textbf{Symmetric Label Noise}} & \multirow{2}{*}{\textbf{avg.}} & \multicolumn{6}{c|}{\textbf{Asymmetric Label Noise}} & \multirow{2}{*}{\textbf{avg.}} \\
\cmidrule{3-8} \cmidrule{10-15}
 & & 12.5\% & 25.0\% & 37.5\% & 50.0\% & 62.5\% & 75.0\% & & 12.5\% & 25.0\% & 37.5\% & 50.0\% & 62.5\% & 75.0\% & \\
\midrule

\multirow{4}{*}{Flowers102 \cite{flowers102}} & CoOp \cite{CoOp} & 88.93 & 83.50 & 77.93 & 70.10 & 55.60 & 37.17 & 68.87  &86.97 & 74.70 & 60.43 & 42.60 & 26.53 & 12.60 &50.64 \\
& GCE \cite{GCE} & 88.80 & 88.33 & 86.73 & 84.07 & 78.37 & 70.37 & 82.78& 88.40 & 86.37 & 80.33 & 69.93 & 61.50 & 39.23&70.96 \\
& JoAPR \cite{Guo2024JoAPR} & 85.57 & 81.23 & 74.60 & 70.23 & 67.90 & 66.93 & 74.41& 85.17 & 79.63 & 73.97 & 73.83 & 53.37 & 13.27&63.21 \\
& NLPrompt \cite{Pan2025NLPrompt}  & 93.87 & 92.57 & 92.73 & 89.90 & 84.77 & 76.80 & 88.44& 93.80 & 93.40 & \textbf{91.77} & 81.10 & 73.63 & 55.33&81.51 \\

& \cellcolor{blue!10}\textbf{VisPrompt} & \cellcolor{blue!10}\textbf{95.57} & \cellcolor{blue!10}\textbf{95.29} & \cellcolor{blue!10}\textbf{93.99} & \cellcolor{blue!10}\textbf{92.37} & \cellcolor{blue!10}\textbf{89.77} & \cellcolor{blue!10}\textbf{77.73} & \cellcolor{blue!10}\textbf{90.79} & \cellcolor{blue!10}\textbf{95.33} & \cellcolor{blue!10}\textbf{93.87} & \cellcolor{blue!10}89.61 & \cellcolor{blue!10}\textbf{83.84} & \cellcolor{blue!10}\textbf{77.69} & \cellcolor{blue!10}\textbf{55.63} & \cellcolor{blue!10}\textbf{82.66} \\ \midrule

\multirow{4}{*}{DTD \cite{dtd}} & CoOp \cite{CoOp} & 56.00 & 49.57 & 43.30 & 34.37 & 27.83 & 17.27 &38.06& 55.60 & 47.75 & 38.07 & 29.63 & 20.53 & 11.70&33.88 \\
& GCE \cite{GCE}& 61.00 & 59.83 & 56.80 & 50.73 & 43.60 & 33.67 &50.94& 60.70 & 57.57 & 52.70 & 43.97 & 33.40 & 18.23&44.43\\
& JoAPR \cite{Guo2024JoAPR}& 58.07 & 57.70 & 56.33 & 53.03 & 48.05 & 29.90 &50.51& 52.40 & 56.63 & 53.10 & 48.93 & 40.20 & 28.26&46.59  \\
& NLPrompt \cite{Pan2025NLPrompt}  & 62.97 & 61.23 & 59.17 & 55.17 & 49.03 & 39.80 &54.56& 62.30 & \textbf{60.60} & 56.47 & 50.80 & 40.27 & 28.37&49.80  \\

& \cellcolor{blue!10}\textbf{VisPrompt} & \cellcolor{blue!10}\textbf{66.55} & \cellcolor{blue!10}\textbf{62.71} & \cellcolor{blue!10}\textbf{62.47} & \cellcolor{blue!10}\textbf{56.32} & \cellcolor{blue!10}\textbf{53.55} & \cellcolor{blue!10}36.89 & \cellcolor{blue!10}\textbf{56.42} & \cellcolor{blue!10}\textbf{66.49} & \cellcolor{blue!10}59.83 & \cellcolor{blue!10}\textbf{57.21} & \cellcolor{blue!10}\textbf{52.07} & \cellcolor{blue!10}\textbf{43.20} & \cellcolor{blue!10}\textbf{33.22} & \cellcolor{blue!10}\textbf{52.00} \\ \midrule

\multirow{4}{*}{EuroSAT \cite{eurosat}} & CoOp \cite{CoOp} & 76.50 & 69.23 & 61.67 & 52.33 & 37.63 & 26.70 &54.01 &76.00 & 66.27 & 53.83 & 41.17 & 28.00 & 17.43&47.12 \\
& GCE \cite{GCE} & 82.13 & 78.60 & 74.67 & 63.13 & 49.67 & 31.40 &63.27& 78.23 & 72.70 & 63.63 & 45.30 & 22.90 & 12.10&49.14\\
& JoAPR \cite{Guo2024JoAPR} & 75.13 & 61.10 & 60.90 & 63.63 & 38.97 & 27.33 &54.51& 69.37 & 67.30 & 59.40 & 47.60 & 33.93 & 17.50 &49.18\\
& NLPrompt \cite{Pan2025NLPrompt}  & 81.4 & 76.51 & 75.58 & 65.02 & 58.40 & 31.00 &64.65& 80.01 & 77.4 & \textbf{70.50} & 54.30 & 23.93 & 13.01&53.19 \\

& VisPrompt\cellcolor{blue!10}& \cellcolor{blue!10}\textbf{82.83} & \cellcolor{blue!10}\textbf{80.58} & \cellcolor{blue!10}\textbf{76.90} & \cellcolor{blue!10}\textbf{67.70} &\cellcolor{blue!10} \textbf{63.49} &\cellcolor{blue!10} \textbf{36.35} &\cellcolor{blue!10}\textbf{67.98}& \cellcolor{blue!10}\textbf{81.95} & \cellcolor{blue!10}\textbf{80.43} &\cellcolor{blue!10} 69.16 & \cellcolor{blue!10}\textbf{54.52} & \cellcolor{blue!10}\textbf{34.78} & \cellcolor{blue!10}\textbf{30.72}&\cellcolor{blue!10}\textbf{58.59} \\ \midrule

\multirow{4}{*}{OxfordPets \cite{oxfordpets}} & CoOp \cite{CoOp} & 76.50 & 66.73 & 60.33 & 47.03 & 35.77 & 24.60 &51.83& 76.10 & 66.20 & 52.53 & 38.73 & 26.63 & 14.90 &45.85 \\
& GCE \cite{GCE}& 85.63 & 84.60 & 83.67 & 79.23 & 71.40 & 53.17 &76.28& 85.50 & 83.03 & 76.73 & 68.07 & 50.70 & 31.97&66.00 \\
& JoAPR \cite{Guo2024JoAPR} & 84.00 & 83.26 & 83.20 & 83.10 & 82.40 & 74.40 &81.73& 82.90 & 83.40 & 79.07 & 75.84 & 52.74 & 43.57&69.59 \\
& NLPrompt \cite{Pan2025NLPrompt}  & 86.13 & 83.53 & 83.51 & 81.51 & 78.74 & 59.41 &78.81& 85.93 & 84.22 & 79.31 & 77.62 & 61.94 & 36.71&70.96 \\

& VisPrompt \cellcolor{blue!10} & \cellcolor{blue!10}\textbf{89.94} & \cellcolor{blue!10}\textbf{89.56} & \cellcolor{blue!10}\textbf{89.78} & \cellcolor{blue!10}\textbf{89.75} & \cellcolor{blue!10}\textbf{88.80} & \cellcolor{blue!10}\textbf{87.14} &\cellcolor{blue!10}\textbf{89.16}& \cellcolor{blue!10}\textbf{90.00} & \cellcolor{blue!10}\textbf{89.67} & \cellcolor{blue!10}\textbf{89.59} & \cellcolor{blue!10}\textbf{89.48} &\cellcolor{blue!10} \textbf{88.74} & \cellcolor{blue!10}\textbf{86.75}&\cellcolor{blue!10}\textbf{89.04} \\ \midrule

\multirow{4}{*}{UCF101 \cite{ucf101}} & CoOp \cite{CoOp} & 69.03 & 63.40 & 58.23 & 49.73 & 40.83 & 26.30 &51.25& 67.23 & 58.07 & 46.47 & 34.43 & 23.67 & 13.17&40.51 \\
& GCE \cite{GCE}& 74.00 & \textbf{73.63} & 72.57 & 69.37 & 66.00 & 57.07 &68.77& 73.90 & 71.87 & 67.97 & 62.23 & 52.50 & 36.37&60.81 \\
& JoAPR \cite{Guo2024JoAPR}& 72.83 & 71.17 & 70.37 & 67.63 & 65.30 & 57.67 &67.50& 72.07 & 69.80 & 64.10 & 59.17 & 56.07 & 47.46&61.45 \\
& NLPrompt \cite{Pan2025NLPrompt}  & 74.83 & 73.40 & 72.83 & 70.33 & 68.10 & 60.53 &70.00& 74.90 & 73.53 & 71.03 & 65.97 & \textbf{58.97} & \textbf{49.27}&65.61 \\
& \cellcolor{blue!10}VisPrompt & \cellcolor{blue!10}\textbf{78.72} & \cellcolor{blue!10}\textbf{78.88} & \cellcolor{blue!10}\textbf{76.98} & \cellcolor{blue!10}\textbf{74.99} & \cellcolor{blue!10}\textbf{70.34} & \cellcolor{blue!10}\textbf{65.40} & \cellcolor{blue!10}\textbf{74.22} & \cellcolor{blue!10}\textbf{78.80} & \cellcolor{blue!10}\textbf{77.43} & \cellcolor{blue!10}\textbf{72.93} & \cellcolor{blue!10}\textbf{67.43} & \cellcolor{blue!10}56.73 & \cellcolor{blue!10}48.96 & \cellcolor{blue!10}\textbf{67.05} \\ \midrule

\multirow{4}{*}{Caltech101 \cite{caltech101}} & CoOp \cite{CoOp} & 86.43 & 81.03 & 76.73 & 70.90 & 61.33 & 46.90 &70.55& 84.93 & 75.23 & 62.87 & 49.43 & 33.57 & 20.33&54.39 \\
& GCE \cite{GCE}& 92.00 & 90.90 & 90.80 & 89.30 & 86.70 & 79.03 &88.12& 91.27 & 91.20 & 89.73 & 85.80 & 78.20 & 62.07&83.05 \\
& JoAPR \cite{Guo2024JoAPR} & 90.30 & 90.45 & 89.90 & 88.27 & 86.93 & 83.93 &88.30& 90.30 & 89.30 & 88.30 & 88.73 & 85.80 & 81.90&87.39 \\
& NLPrompt \cite{Pan2025NLPrompt} & 91.73 & 91.13 & 90.77 & 89.93 & 88.30 & 86.70 &89.76& 91.60 & 91.17 & 90.20 & \textbf{89.27} & 86.17 & \textbf{81.07}&88.25 \\

& \cellcolor{blue!10}\textbf{VisPrompt} & \cellcolor{blue!10}\textbf{93.14} & \cellcolor{blue!10}\textbf{92.49} & \cellcolor{blue!10}\textbf{91.68} & \cellcolor{blue!10}\textbf{91.16} & \cellcolor{blue!10}\textbf{90.18} & \cellcolor{blue!10}\textbf{89.49} & \cellcolor{blue!10}\textbf{91.36} & \cellcolor{blue!10}\textbf{92.62} & \cellcolor{blue!10}\textbf{92.33} & \cellcolor{blue!10}\textbf{90.79} & \cellcolor{blue!10}88.72 & \cellcolor{blue!10}\textbf{89.37} & \cellcolor{blue!10}80.85 & \cellcolor{blue!10}\textbf{89.11} \\ \midrule
\end{tabular}%
}

\end{table*}

\begin{figure*}[htbp]
\centering
\includegraphics[width=0.98\linewidth]{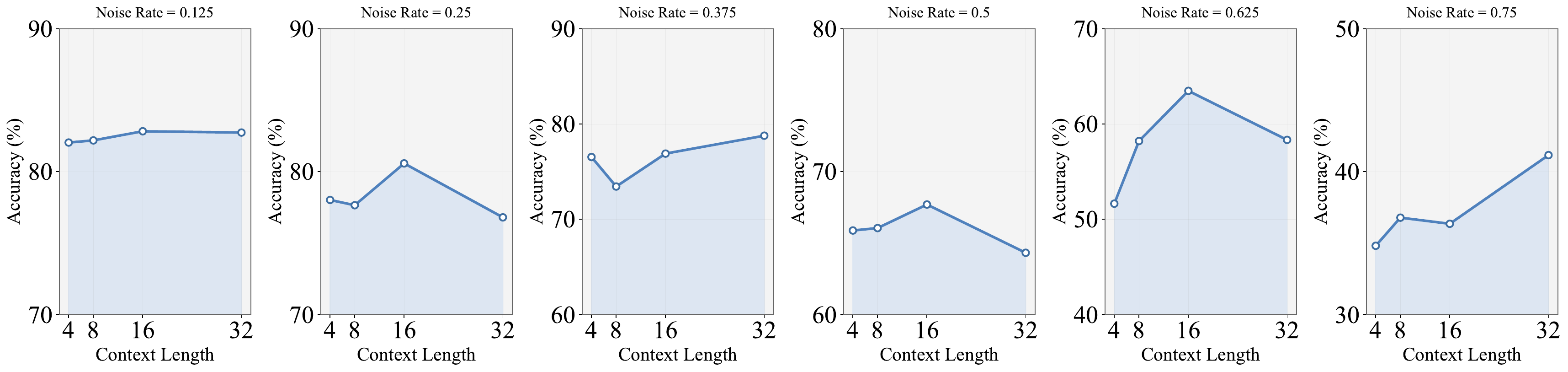} 
\caption{Test accuracy (\%) under different context token lengths}
\label{fig:context_length}
\end{figure*}

\subsection{Main Experiments}


\begin{table}[htbp]
\centering
\caption{Performance on Food101N dataset.}
\setlength{\tabcolsep}{4pt} 
  \resizebox{\columnwidth}{!}{
\begin{tabular}{lccccc}
\toprule 
\textbf{Method} & CoOp \cite{CoOp} & GCE \cite{GCE}& JoAPR \cite{Guo2024JoAPR} & NLPrompt \cite{Pan2025NLPrompt} & VisPrompt \\
\midrule
\textbf{Accuracy} (\%) & 69.50 & 71.32 & 72.57 & 76.46 & \textbf{79.20} \\
\bottomrule
\end{tabular}
}
\label{tab:food101n_perf}
\end{table}

Table \ref{tab:main} summarizes the classification accuracy of VisPrompt and four strong baselines on six benchmarks under both symmetric and asymmetric label noise. Overall, VisPrompt achieves the best performance in all datasets and in almost all noise settings. Averaged over all noise rates, our method outperforms the strong baseline NLPrompt on every dataset for both noise types. The gains are particularly clear on Caltech101. On Flowers102, where the underlying task is relatively easier, VisPrompt still delivers consistent improvements and maintains high accuracy even at very large noise rates. As shown at Table \ref{tab:food101n_perf}, on the real-world Food101N dataset, our framework further delivers a 2.74\% improvement. Another important observation is that VisPrompt degrades much more gracefully as the noise rate increases. Under 75\% asymmetric noise, text-only prompt learning (CoOp) and loss-based robust methods (GCE, JoAPR) suffer large drops on several datasets, while VisPrompt preserves significantly higher accuracy. On EuroSAT, for instance, VisPrompt improves the average accuracy under asymmetric noise from about 53\% with NLPrompt to about 59\%, and at the most challenging 75\% noise rate, it yields more than 15 points gain over loss-based baselines. These results indicate that coupling prompt learning with image–guided FiLM gating provides robustness not only at moderate noise rates but also in the extremely noisy regime, supporting our claim that instance level visual guidance is an effective way to combat label noise.

\subsection{Scalability on Diverse Image Encoders}

We evaluate the scalability of the proposed model on the EuroSAT dataset under symmetric label noise with a noise rate of 0.125. We consider several CLIP visual backbones, including Vision Transformer-Base/16 (ViT-B/16), Vision Transformer-Base/32 (ViT-B/32), ResNet-50x16 (RN50x16), ResNet-50x4 (RN50x4), ResNet-50 (RN50), and ResNet-101 (RN101). As reported in Table \ref{tab:clip_backbone_acc}, the performance remains relatively stable across architectures, suggesting that our approach is not sensitive to the specific backbone choice. More importantly, the consistent gains across both transformer-based and ResNet-based encoders indicate the scalability of the proposed method: it can be seamlessly transferred to backbones with different model capacities and architectural biases without requiring backbone-specific redesign. This result shows that our framework is compatible with a wide spectrum of CLIP variants, from relatively compact models to stronger visual encoders.

\begin{table}[htbp]
  \centering
    \caption{Accuracy (\%) over vision backbones on EuroSAT dataset.}
  \label{tab:clip_backbone_acc}
  
  \resizebox{\columnwidth}{!}{
  \begin{tabular}{c*{6}{c}}
    \toprule 
    \textbf{Backbones} & ViT-B/16 & ViT-B/32 & RN50x16 & RN50x4 & RN50 & RN101 \\
    \midrule
    \textbf{Accuracy (\%)} & 83.62 & 82.83 & 82.78 & 83.67 & 82.83 & 82.69 \\
    \bottomrule
  \end{tabular}
  
  }

\end{table}

\subsection{Ablation Study}

To evaluate the role of image information in guiding prompt generation and the effect of the FiLM \cite{perez2018film} gate in mitigating data drift, we conduct ablation experiments on the EuroSAT dataset, as reported in Fig.~\ref{fig:ablation_combined}. With the CLIP backbone frozen, we compare three variants: w/o vision refers a text-only baseline without visual guidance, w/ vision(no FiLM) refers an image-guided variant that introduces cross-modal residuals but disables the FiLM gate, and the full VisPrompt model that enables both image guidance and the FiLM gate. EuroSAT mainly consists of aerial remote-sensing images, where language priors are relatively weak while visual textures and structural cues are more prominent, which makes it well suited for examining the effectiveness of the vision-to-text pathway and the gating mechanism.

The results show that adding image guidance without gating yields only limited improvements over the text-only counterpart, and its advantage quickly vanishes as the noise rate increases, particularly under asymmetric noise. In contrast, VisPrompt consistently outperforms the other two variants across all noise rates and obtains larger gains in high-noise regimes. Without the gating mechanism, cross-modal residuals integrate image evidence into all context tokens without distinction, so incorrect labels or spurious visual patterns can amplify their interference on prompt learning. The FiLM gate parameterizes the injection strength as a learnable trust coefficient: It increases the degree of integration when visual cues are consistent with textual or class semantics, and reduces it when labels conflict with images or visual evidence is ambiguous. It is worth noting that the parameters introduced by FiLM account for only 0.3\% of the total model parameters, making it lightweight and efficient. Overall, image guidance provides additional information beyond text priors, and the FiLM gate transforms this information into stable generalization under noisy supervision.

\begin{figure}[htbp]
    \centering
    \includegraphics[width=\linewidth]{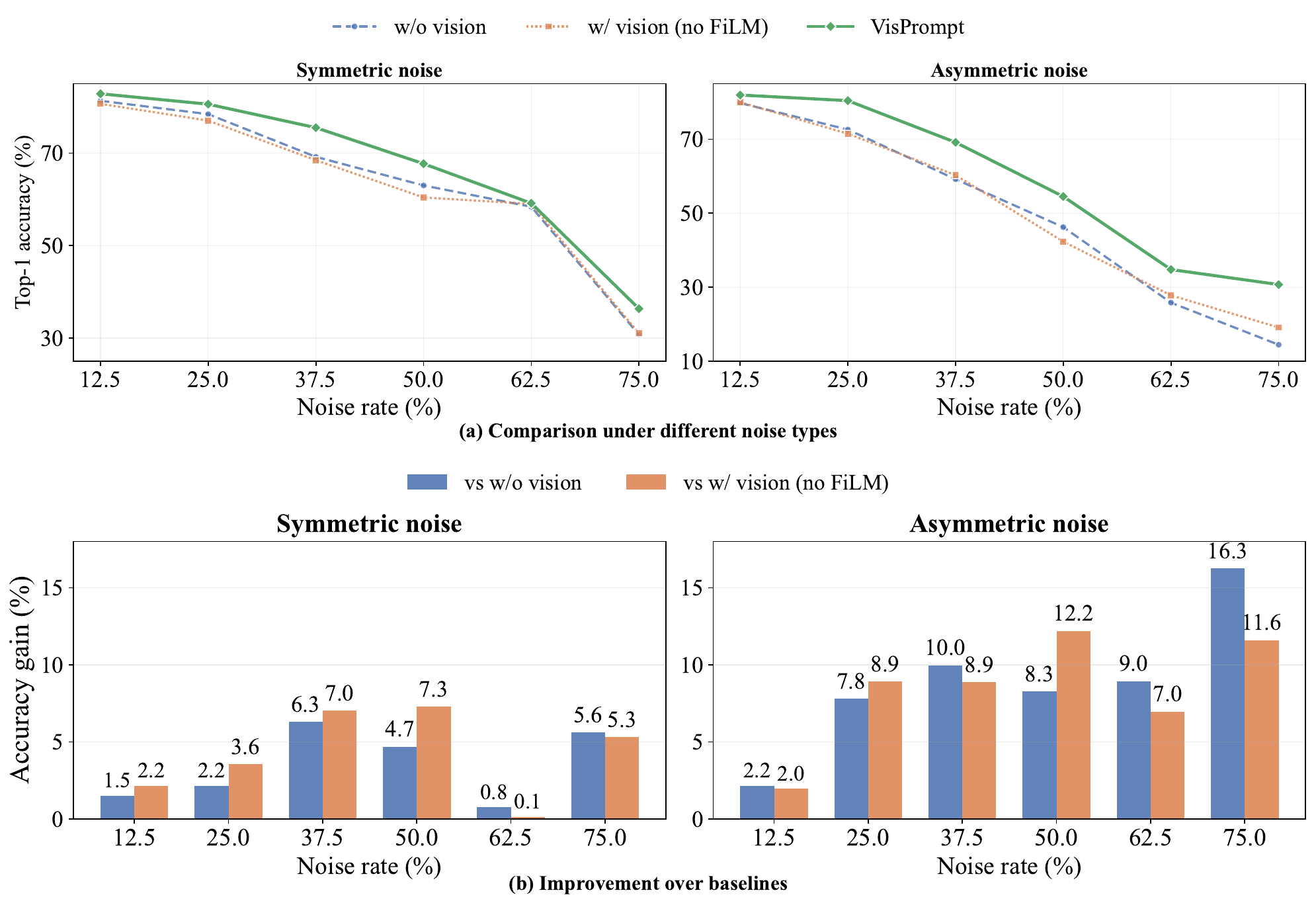}
    \caption{Ablation performance comparison. }
    \label{fig:ablation_combined}
\end{figure}

\subsection{Sensitivity on Different Shots}

To illustrate the impact of the different number of shots within our framework, we conduct experiments on the EuroSAT dataset under the symmetric noise, using training sample counts of [1, 2, 4, 8, 16].
The results are shown in Table \ref{tab:shot_noise}, which reflects that our method can steadily benefit from additional shots and maintain the relatively stable performance as the noise rate increases, indicating strong robustness against label noise and high sample efficiency across a wide range of noise rates.

\begin{table}[htbp]
  \centering
\caption{Accuracy (\%) under symmetric noise at different noise rates and shot settings. }
\label{tab:shot_noise}
  \resizebox{0.85\columnwidth}{!}{
  \begin{tabular}{c*{6}{c}}
    \toprule
    \multirow{2}{*}{Shots} & \multicolumn{6}{c}{Noise Rates} \\
    \cmidrule(lr){2-7}
     & 12.5\% & 25.0\% & 37.5\% & 50.0\% & 62.5\% & 75.0\% \\
    \midrule
    1  & 43.15 & 42.74 & 33.84 & 28.31 & 27.11 & 19.36 \\
    2  & 57.00 & 45.52 & 37.17 & 27.13 & 26.60 & 20.18 \\
    4  & 72.59 & 62.37 & 55.36 & 54.63 & 42.37 & 24.23 \\
    8  & 76.12 & 75.89 & 67.38 & 58.46 & 38.53 & 29.77 \\
    16 & 82.83 & 80.58 & 75.50 & 67.70 & 59.16 & 36.35 \\
    \bottomrule
  \end{tabular}
  }

\end{table}

\subsection{Sensitivity on the Context Length}

To examine the effect of the context token length under varying noise conditions, we conduct experiments on the EuroSAT dataset with context lengths of [4, 8, 16, 32] under symmetric label noise rates ranging from 12.5\% to 75.0\%. The results are shown in Fig.~\ref{fig:context_length}. The performance remains relatively stable across different context lengths, particularly under low and moderate noise rates. Although some fluctuations appear as the noise rate increases, no consistent monotonic relationship is observed between context length and final accuracy. These results indicate that the robustness of our framework does not rely on a narrowly tuned token length and remains effective across a broad range of context configurations.

\section{Discussion}

For limitations, though VisPrompt demonstrates strong robustness to label noise across multiple benchmarks, several limitations still remain. First, the current study is mainly conducted on image classification tasks based on CLIP-style vision-language backbones, which means the generality of the framework beyond this setting has not yet been fully validated. Second, our experiments primarily consider synthetic symmetric and asymmetric label noise, together with only one real-world noisy dataset. As a result, the current evaluation does not yet cover a sufficiently broad range of realistic noise conditions. Third, compared with text-only prompt learning methods, the introduced cross-modal FiLM block brings a moderate amount of additional computational cost. While this overhead is relatively limited in our experiments, it may still become a constraint in highly resource-restricted deployment scenarios.

Regarding future research, there are several directions. First, it would be valuable to extend VisPrompt beyond image classification to more challenging vision tasks, such as object detection, semantic segmentation, and video understanding, in order to assess its effectiveness in broader multimodal learning settings. Second, future work could investigate more diverse and realistic noise patterns, including instance-dependent noise and open-set noise, which would enable a more comprehensive evaluation of the robustness of the proposed framework. Third, further efforts can be devoted to improving the computational efficiency of the cross-modal FiLM block, making the method more suitable for deployment in extremely resource-constrained environments.

\section{Conclusion}
\label{sec:conclusion}
We present VisPrompt, a noise robust prompt learning framework that exploits cross-modal interaction to more fully leverage image information and align image and prompt representations. 
By utilizing visual features as instance-level guidance, VisPrompt enables more accurate prompt generation than relying on text labels alone. The FiLM gating module adaptively regulates visual injection at both token and instance levels, mitigating the adverse effect caused by label noise. Extensive experiments on multiple benchmark datasets with both synthetic and real-world label noise, demonstrate that VisPrompt consistently outperforms existing prompt learning based baselines while remaining parameter efficient.

\bibliographystyle{ACM-Reference-Format}
\bibliography{sample-sigconf-authordraft}

\appendix









\end{document}